\newif\ifpreprint
\newcommand{\tr}[1]{\operatorname{Tr}\left\{#1\right\}}
\DeclareMathOperator{\sinc}{sinc}
\newcommand{\Rn}{\mathbb{R}}
\newcommand{\vf}{\mathbf}
\newcommand{\intbounds}{\int_{-1}^{1-2\min\{|t|,1\}}}
\title{Sparse Gaussian Processes via Parametric Families of Compactly-supported Kernels}
\author{%
Jarred Barber \\
Charles River Analytics \\
Cambridge, MA 02138 \\
\texttt{jarred.barber@gmail.com}
}
\newtheorem{theorem}{Theorem}
\newtheorem{lemma}{Lemma}
\begin{document}
\maketitle
\begin{abstract}
Gaussian processes are powerful models for probabilistic machine learning, but are limited in application by their $O(N^3)$ inference complexity. We propose a method for deriving parametric families of kernel functions with compact spatial support, which yield naturally sparse kernel matrices and enable fast Gaussian process inference via sparse linear algebra. These families generalize known compactly-supported kernel functions, such as the Wendland polynomials.
The parameters of this family of kernels can be learned from data using maximum likelihood estimation. Alternatively, we can quickly compute compact approximations of a target kernel using convex optimization. 
We demonstrate that these approximations incur minimal error over the exact models when modeling data drawn directly from a target GP, and can out-perform the traditional GP kernels on real-world signal reconstruction tasks, while exhibiting sub-quadratic inference complexity.
\end{abstract}

\section{Introduction}
In recent years, Gaussian processes (GPs) have become an increasingly popular class of models in machine learning due to their simplicity of implementation, flexibility, and ability to perform exact Bayesian posterior inference on observed data. They are particularly popular in time series modeling\cite{roberts2013gaussian}. Recently, GPs were proposed as models for Bayesian signal processing with the introduction of the sinc kernel\cite{tobar2019band} and recent work on spectral analysis with GPs\cite{pmlr-v84-ambrogioni18a}.

An issue that plagues wider adoption of GPs is computational complexity. General exact inference methods on $N$ data points, as well as the computation of the data likelihood gradient (for parameter learning via maximum likelihood) requires the inversion of an $N\times N$ covariance matrix. Barring any special structure in the matrices, this requires $O(N^3)$ time and $O(N^2)$ space. This makes them ill suited for very large machine learning tasks with hundreds of thousands or millions of data points. Even on smaller problems, $O(N^3)$ inference complexity limits the utility of GPs for applications with latency requirements or compute limitations.

There has been a variety of work on scaling Gaussian processes to large datasets\cite{liu2020gaussian}, typically through approximations. These include methods that subsample data\cite{nips2005_matching_pursuit}, or find low-rank approximations of the kernel matrix\cite{kissgp}. 
Recent work\cite{gardner2018gpytorch} has focused on exploiting GPU accelerated black-box matrix-vector multiply (MVM) operations to compute matrix inverses with iterative algorithms, such as the conjugate gradient algorithm. These results have been extended to exact GP training and inference on large datasets\cite{NIPS2019_9606}; by computing the kernel dynamically, they avoid the $O(N^2)$ storage requirement to enable the use of GPU-accelerated MVMs, but do not directly decrease the $O(N^2)$ MVM time complexity.

A little-explored area has been in choices of GP kernels with compact support that naturally yield \emph{sparse} kernel matrices\cite{wendland1995piecewise}\cite{melkumyan2009sparse}, which we will abbreviate here as "compact kernels". These kernels can yield very fast matrix-vector multiply operations, on the order of the number of \emph{non-zero} entries of the kernel matrix. If a dataset yields good sparsity structure, this can be exploited by black-box MVM methods for $O(N^2)$ or better covariance matrix inversion. However, there are only a small number of fixed kernels of this class that have been identified in the literature; they lack specific hyperparameters that can be tuned to data, and are qualitatively very similar to each other.

This paper generalizes and extends these to larger families of functions parameterized by positive definite matrices. We show that given a set of $M$ basis functions, we can generate a space of compactly supported positive definite kernels with dimensionality up to $\frac{M(M+1)}{2}$ through the evaluation of a particular integral. We then construct two concrete examples of these kernel families using polynomial and Fourier basis functions.

These functions can be fit directly to data using maximum likelihood estimation, but we also show that they can be used to approximate existing kernels. We derive an efficient algorithm for computing a compact approximation of a fixed kernel using convex optimization.

Finally, we perform experiments to demonstrate the utility of our constructed kernels. We show that sparse linear algebra can provide substantial inference speedups over na\"ive implementations of GP regression (the exact speedup depending on the structure of the data). We also compare a set of common non-compact kernels to their compact approximations, derived using our convex optimization approach, to simulated data (drawn from a known GP) as well as real data (speech data from the TIMIT\cite{garofolo1993darpa} dataset). We then compare common non-compact kernels and previously identified compact kernels with our parametric kernels by directly maximizing likelihood on the TIMIT dataset, using the GPyTorch\cite{gardner2018gpytorch} framework. We find that the flexibility of our parameterization allows for better performance both in terms of likelihood as well as reconstruction RMSE.

\begin{figure}[t!]
    \centering
    \begin{subfigure}[t]{0.3\linewidth}
         \centering
         \includegraphics[width=\linewidth]{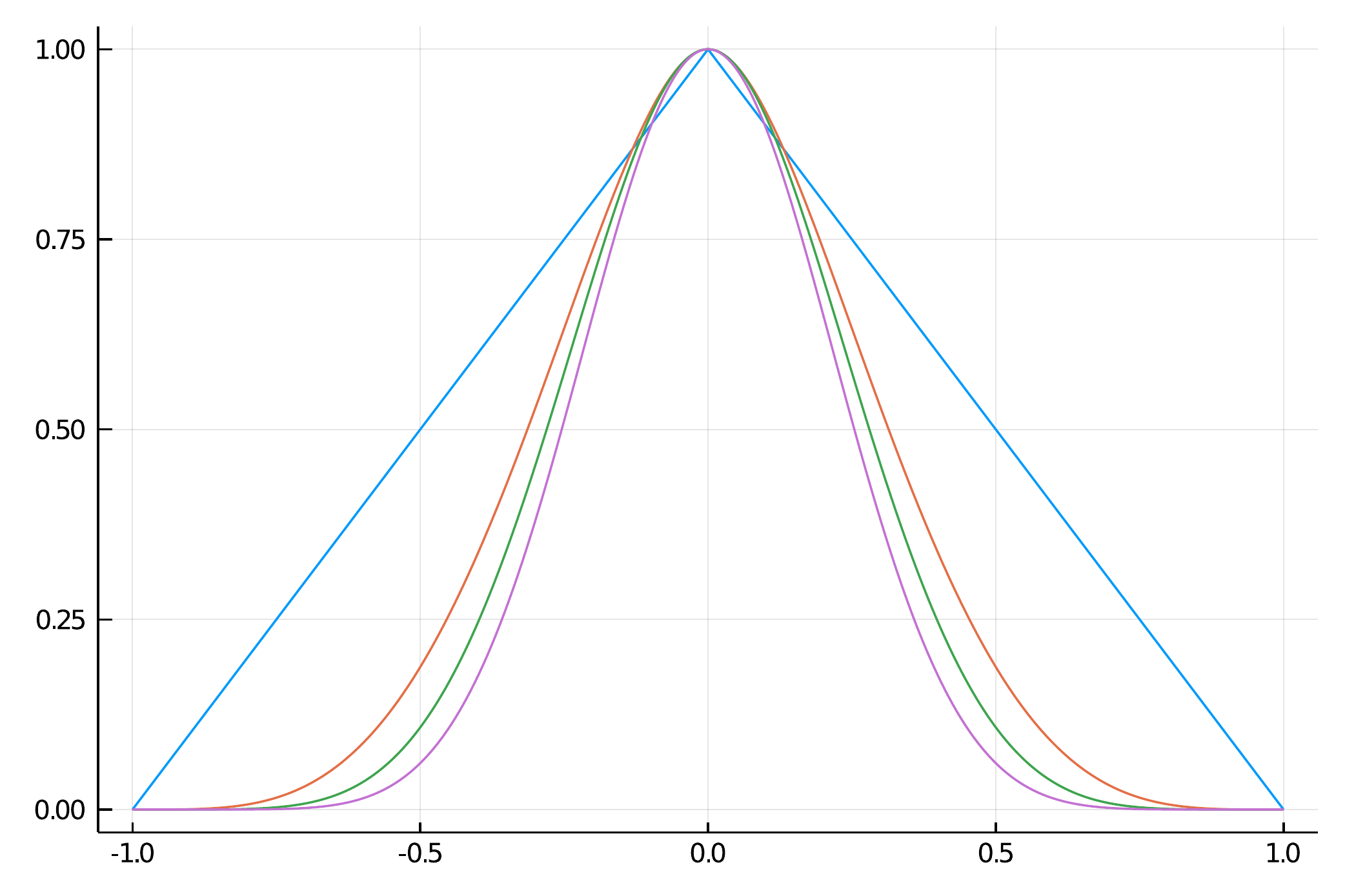}
         \caption{Some examples of Wendland polynomials}
        \label{fig:wendland_fns}
     \end{subfigure}
    %  \begin{subfigure}[t]{0.24\linewidth}
    %      \centering
    %      \includegraphics[width=\linewidth]{}
    %      \caption{Sparse covariance function from \cite{melkumyan2009sparse}}
    %      \label{fig:2009_cov}
    %  \end{subfigure}   % \includegraphics{}
    \begin{subfigure}[t]{0.33\textwidth}
        \centering
        \includegraphics[width=\linewidth]{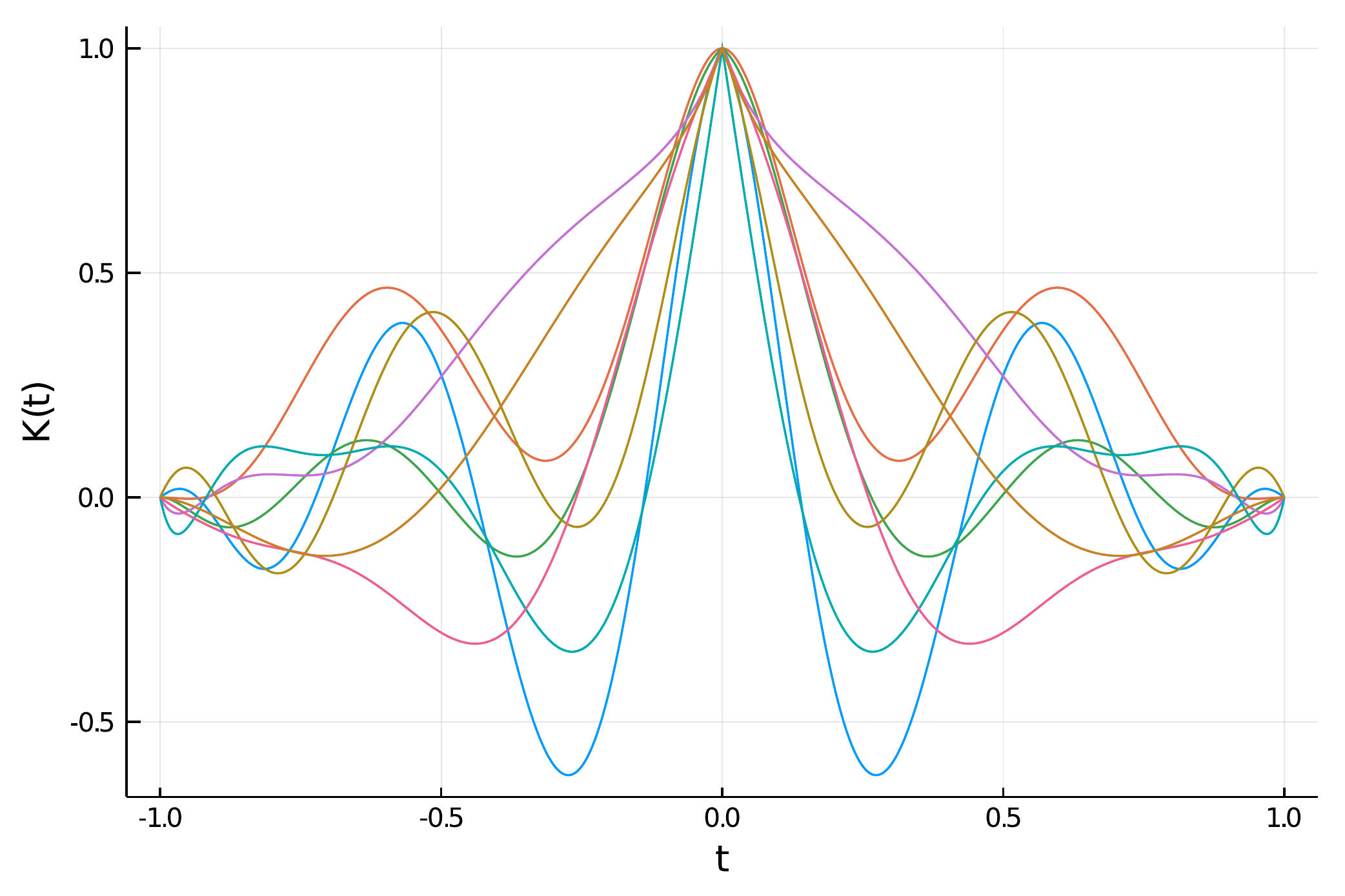}
        \caption{Polynomials (order 5)}
        \label{fig:poly_rand}
    \end{subfigure}%
    \begin{subfigure}[t]{0.33\linewidth}
        \centering
        \includegraphics[width=\linewidth]{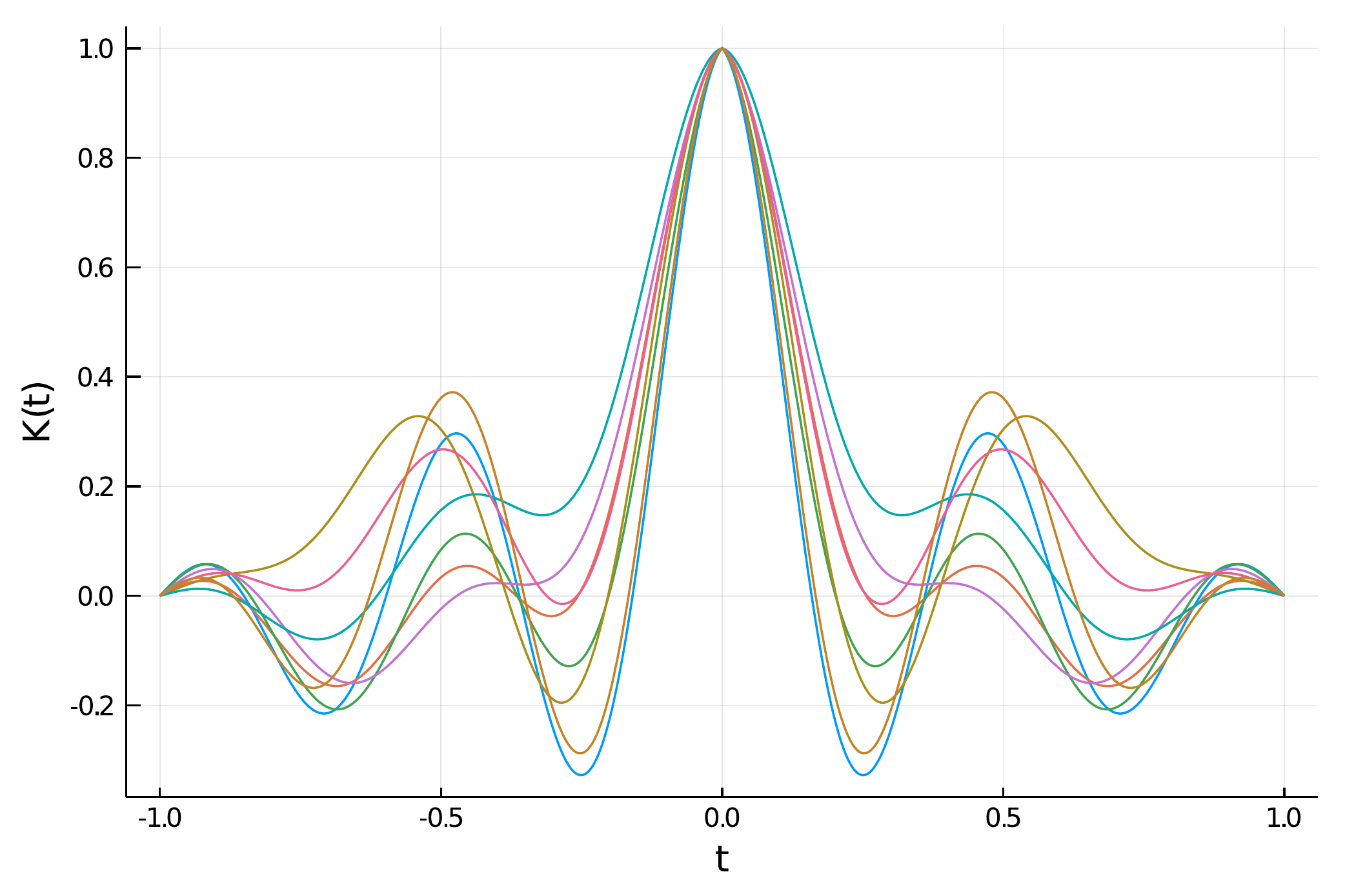}
        \caption{Fourier modes (order 3)}
        \label{fig:fourier_rand}
    \end{subfigure}
    \caption{Examples of compactly supported kernel functions. (a) previously described functions (Wendland polynomials) vs. parametric kernels generated from (b) polynomials and (c) Fourier modes.}
    \label{fig:random_functions}
\end{figure}
\section{Background}
\textbf{Gaussian processes and positive definite kernels}
Gaussian processes can be thought of as distributions over functions $f : \mathbb{R}^d\rightarrow \mathbb{R}$ with the property that, for any finite collection of $N$ points $x_k \in \Rn^d$, the function values $f(\vf x_1)\cdots f(\vf x_N)$ are distributed normally:
\begin{equation}
    \log p(\mathbf{y}) = -\frac{1}{2}(\mathbf y - \mathbf \mu)^T\mathbf K^{-1}(\mathbf y - \mathbf \mu) - \frac{1}{2}\log |2\pi \mathbf K|
    \label{eq:gp_ll}
\end{equation}
where $\vf y_i = f(\vf x_i)$, $\mu_i = \mu(x_i)$ for some mean function $\mu : \Rn^d \rightarrow \Rn$, and $\mathbf K_{ij} = K(\vf x_i, \vf x_j)$ for some positive definite kernel function $K$. For the rest of this paper, we assume that $\mu(x) = 0$; extending to non-zero $\mu$ is fairly trivial.

A typical use case for Gaussian processes is to observe some data $(\vf x_i, f(\vf x_i))_{i=1\cdots N}$, then compute the posterior distribution over a set of unobserved points; by fixing $M$ points $\vf x_1'\cdots \vf x_M'$, the GP posterior is given by:
\begin{align}
    \vf y' &= (f(\vf x_1') \cdots  f(\vf x_M')) \sim \mathcal{N}\left(\mathbf{K}'\mathbf{K}^{-1}\mathbf{y},
   \mathbf{K}'' - \mathbf{K}'\mathbf{K}^{-1}\mathbf{K}'^{T} 
    \right)
    \label{eq:gp_inference}
\end{align}
where $\mathbf{K}'_{ij} = K(x'_i, x_j)~\text{and}~ \mathbf{K}''_{ij} = K(x'_i, x'_j)$. The restrictions on the kernel function $K(\vf x, \vf x')$ is positive definiteness; that is, that for any finite collection of points $\mathbf{x}_i \in \Rn^d$, the kernel matrix $\mathbf{K}$ is positive semi-definite.  If we assume that the kernel matrix is \emph{translation invariant}, then we can write 
$K(\mathbf{x}_i, \mathbf{x}_j)$ as $K(\mathbf{x}_i - \mathbf{x}_j)$. These kernels can be analyzed using Fourier transforms, and positive definiteness is equivalent to the kernel function $K(\mathbf x)$ having a real, non-negative Fourier transform (Bochner's theorem\cite{williams2006gaussian}).

Generally, a modeler will choose a family of kernels parameterized by a set of parameters $\theta$. The standard method for estimating the kernel parameters from a set of training data is by maximizing the log-likelihood \eqref{eq:gp_ll} using gradient-based methods. The gradient of \eqref{eq:gp_ll} w.r.t. a kernel parameter $\theta_i$ can be computed using matrix calculus as:
\begin{equation}
    \partial_{\theta_i} \log p(\mathbf y) =
    \frac{1}{2}\mathbf{y}^T\mathbf{K}^{-1}(\partial_{\theta_i} \mathbf{K})\mathbf{K}^{-1}\mathbf y - \frac{1}{2}\tr{\mathbf{K}^{-1}\partial_{\theta_i}\mathbf{K}}
    \label{eq:ll_grad}
\end{equation}

\textbf{Iterative and stochastic methods}
Iterative methods (also referred to as ``matrix-free methods'') solve a linear system $\vf A \vf x =\vf b$ without requiring a numerical representation of $\vf A$, but by assuming the existence of a black-box algorithm for computing the product $\vf A\vf x$ on an arbitrary input $\vf x$. The prototypical example of this type of method is the \emph{conjugate gradient} (CG) algorithm\cite{shewchuk1994introduction}. CG poses the problem of solving $\vf x = \vf A^{-1}\vf b$, where $\vf A$ is an $N\times N$ positive-definite matrix, as a convex minimization problem:
\begin{equation}
    \vf x = \underset{\vf x'}{\operatorname{argmin}} ~\| \vf A \vf x' - \vf b\|^2
    \label{eq:cg_min}
\end{equation}
Then, starting from an initial guess $\vf x_0$, it takes a sequence of optimally-sized steps $\vf p_1, \vf p_2, \cdots$ along the gradient vectors of \eqref{eq:cg_min}, projected to be conjugate to previous step directions:
\begin{equation*}
    \vf p_i^T \vf A\vf p_j = 0 ~(i\neq j)
\end{equation*}
CG is guaranteed to converge in at most $N$ steps; since a generic MVM requires $O(N^2)$ time, this yields $O(N^3)$ time for computing $\mathbf{A}^{-1}\mathbf{b}$. In practice, CG reaches reasonable accuracy (such as floating point precision) in much fewer iterations; the exact convergence rate depends on the condition number of the matrix.

While scalable methods have been developed to compute the log determinant term in $\eqref{eq:gp_ll}$\cite{dong2017scalable}, we are more interested in the gradient \eqref{eq:ll_grad} with respect to a parameter $\theta$. The first term in \eqref{eq:ll_grad} is the gradient of the log determinant. Given a fast algorithm for computing $\mathbf{K}^{-1}\mathbf b$ (for some $\mathbf{b}$), we can construct a stochastic estimator for the trace term\cite{hutchinson1989stochastic}\cite{dong2017scalable}:
\begin{equation}
     \tr{\mathbf{K}^{-1}\partial_{\theta_i} \mathbf{K}} =
     \mathbb{E}_\mathbf{b}\left[\langle\mathbf{K}^{-1}\mathbf{b}, (\partial_\theta \mathbf{K})\mathbf{b}\rangle\right],~\mathbf{b}\sim\mathcal{N}(0, I)
\end{equation}
Replacing the trace term in \eqref{eq:ll_grad} yields an unbiased gradient estimator, and we can perform SGD without directly computing $\vf K^{-1}\partial_{\theta_i}\vf K$.

\textbf{Sparse linear algebra}
The particular matrix structure we seek to exploit is \emph{sparsity}. If an $N \times N$ matrix has only $N_\text{NZ}$ non-zero entries (with $N \le N_\text{NZ} \le N^2$), then computing a matrix-vector product requires $O( N_\text{NZ})$ time and $O( N_\text{NZ})$ space (since we only need to store the non-zero values along with their indices). If we can construct reasonably sparse ($N_\text{NZ}$ scaling as $O(N)$) kernel matrices, and they are well enough conditioned that CG converges sufficiently in $N_c \ll N$ iterations, then we expect to see sub-quadratic time matrix inversions. Whether that is achievable or not is highly dependent on the data; we have found this works well with time series data due to the distribution of distances between points having a low coefficient of variation (standard deviation relative to mean).

Another complexity concern is the computation of the kernel matrix.  While this is a lower order term (asymptotically) than inversion or determinant calculation, it can still be significant. In the dense case with standard linear algebra, the full $N\times N$ kernel matrix needs to be computed. In the sparse case, we only need to compute the $N_\text{NZ}$ non-zero entries of the kernel matrix. However, we need to know which entries are non-zero; na\"ively, this can be can done in $O(N^2)$ time. If the data is one dimensional and pre-sorted, we can do this in $O(N)$ time using a sliding window technique; higher dimensions can be handled with orthogonal range reporting techniques\cite{van2000computational}.

\textbf{Compactly supported kernel functions}
In general, kernel matrices are never naturally sparse. While certain kernels such as the square-exponential kernel $K(t) = e^{-t^2}$ decay very rapidly, they never reach zero. To achieve sparsity, we can try to find kernel functions that have compact support; i.e., they are zero outside of some interval $[-c, c]$ (which we will generically take to be $[-1, 1]$).  While any function can be converted to have compact support by truncation (i.e., define $\tilde f(t)$ to be equal to $f(t)$ for $t \in [-1, 1]$ and zero outside of that set), the result will usually not be valid kernels. %#these functions will almost never be valid positive definite kernels.

Some compactly supported positive definite kernels have been identified in the literature, such as the Wendland polynomials \cite{wendland1995piecewise} and the function described in \cite{melkumyan2009sparse}. The most well-known subset of the former are shown in Figure \ref{fig:wendland_fns}; these are defined by:
\begin{equation}
    \begin{aligned}
    w_1(t) &= (1 - |t|)_+ &
    w_2(t) &= (1 - |t|)_+^4(4|t|+1) \\
    w_3(t) &= \frac{1}{3}(1 - |t|)_+^6(35|t|^2 + 18|t| + 3) & 
    w_4(t) &= (1 - |t|)_+^8(32|t|^3 + 25|t|^2 + 8|t| + 1) 
    \end{aligned}
    \label{eq:wendland_polys}
\end{equation}
where $(x)_+$ is the ReLU function. As we can see from the plots, the various compactly supported kernels have qualitatively ``Gaussian-like'' shapes; this observation is formalized in \cite{chernih2014wendland}, where the Wendland polynomials are shown to converge to a squared-exponential kernel in appropriate limits (with the region of support going to infinity).
% \section{Method}
\section{Parametric families of compactly-supported kernels}
Our goal is to generalize and extend these functions to larger families of compact kernels. The main insight that we use is the fact that for an arbitrary function $f(t)$, the \emph{autocorrelation function}:
\begin{equation}
    f_\text{corr}(t) = \int_\mathbb{R} f(x)f(x+t) dx
\end{equation}
is positive definite (whenever the above integral is defined); this follows from the convolution theorem in Fourier analysis and Bochner's theorem. We can take a linear combination of basis functions $f^\alpha(t) = \sum_k \alpha_k\phi_k(t)$, with $t$ restricted to the interval $[-1, 1]$, and autocorrelate it to result in a positive definite function $f_\text{corr}^\alpha(t)$. We can then take sums of these functions, for different coefficients $\alpha$, to form new postive definite functions. This process is formalized in the following main result, where all of the coefficients involved are contained in a positive semi-definite matrix:
\begin{theorem}
\label{thm:main}
Let $\{\phi_i\}_{i=1\cdots M}$ be a finite set of real- or complex-valued basis functions which are square integrable on $[-1, 1]$, i.e. $\int_{-1}^1 |\phi_i(t)|^2 dt < \infty$. 

Define the matrix-valued function $\Phi(t)$ as:
\begin{equation}
    \Phi_{ij}(t) = \frac{1}{2}\int_{-1}^{1 - \min\{|t|, 1\}} \phi_i^*(x)\phi_j(x+2|t|) + \phi_j(x)\phi_i^*(x+2|t|) dx
    \label{eq:phi_integral}
\end{equation}
Then, for any (real) $M\times M$ positive semi-definite matrix $\vf A$, the following function is a compact positive-definite kernel supported on $[-1, 1]$:
\begin{equation}
    K_A(t) = \tr{\vf A \Phi(t)}
\end{equation}
\end{theorem}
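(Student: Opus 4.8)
The plan is to reduce the claim for a general positive semidefinite matrix $\vf A$ to the rank-one case via an eigendecomposition, and then to recognize each rank-one contribution as (the real, even part of) an autocorrelation, which is positive definite by the Fourier/Bochner argument already invoked in the text. Throughout I would work with the function $f_{\vf v}(x) = \sum_k v_k \phi_k(x)$ for a coefficient vector $\vf v$, understood to be extended by zero outside $[-1,1]$, so that $f_{\vf v}\in L^2(\Rn)$ with support in $[-1,1]$.

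First I would record that $\Phi(t)$ is Hermitian for each $t$: writing $C_{ij}(t) = \int \phi_i^*(x)\phi_j(x+2|t|)\,dx$ over the overlap region, the two terms in the integrand of \eqref{eq:phi_integral} are exactly $C_{ij}(t)$ and $C_{ji}^*(t)$, so $\Phi(t) = \tfrac12(C(t) + C(t)^H)$ is the Hermitian part of $C(t)$. From this it follows immediately that for a real vector $\vf v$,
\[
\vf v^T \Phi(t)\vf v = \operatorname{Re}\big(\vf v^H C(t)\vf v\big) = \operatorname{Re}\!\left(\int f_{\vf v}^*(x)\,f_{\vf v}(x+2|t|)\,dx\right) = \operatorname{Re}\big(f_{\vf v,\mathrm{corr}}(2|t|)\big),
\]
where the integral runs over those $x$ with $x\in[-1,1]$ and $x+2|t|\in[-1,1]$, which is precisely the range of integration in \eqref{eq:phi_integral} (the upper limit enforcing $x + 2|t|\le 1$) and which is empty once $|t|\ge 1$, giving the claimed support in $[-1,1]$.

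Next I would establish that $h_{\vf v}(t) := \operatorname{Re}\big(f_{\vf v,\mathrm{corr}}(2t)\big)$ is positive definite. The autocorrelation $f_{\vf v,\mathrm{corr}}$ is continuous (by continuity of translation in $L^2$) and, by the convolution theorem, has Fourier transform $|\ft{f_{\vf v}}(\omega)|^2 \ge 0$. Taking the real part symmetrizes it, so $\operatorname{Re}(f_{\vf v,\mathrm{corr}})$ has Fourier transform $\tfrac12(|\ft{f_{\vf v}}(\omega)|^2 + |\ft{f_{\vf v}}(-\omega)|^2)\ge 0$; rescaling the argument by $2$ only rescales the frequency axis and preserves non-negativity. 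Bochner's theorem then gives that $h_{\vf v}$ is positive definite, and it is supported in $[-1,1]$ since $f_{\vf v,\mathrm{corr}}$ is supported in $[-2,2]$. Because $\operatorname{Re}(f_{\vf v,\mathrm{corr}})$ is even, $\operatorname{Re}(f_{\vf v,\mathrm{corr}}(2|t|))=\operatorname{Re}(f_{\vf v,\mathrm{corr}}(2t))$, so $h_{\vf v}(t)$ agrees with $\vf v^T\Phi(t)\vf v$ from the previous step, absorbing the $|t|$ versus $t$ discrepancy.

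Finally I would assemble the general case. Diagonalizing the real PSD matrix as $\vf A = \sum_r \lambda_r \vf v_r \vf v_r^T$ with $\lambda_r \ge 0$ and real $\vf v_r$, linearity and the cyclic property of the trace give
\[
K_A(t) = \tr{\vf A\,\Phi(t)} = \sum_r \lambda_r\, \vf v_r^T \Phi(t)\, \vf v_r = \sum_r \lambda_r\, h_{\vf v_r}(t),
\]
a non-negative combination of positive-definite, compactly-supported functions, hence itself a positive-definite kernel supported in $[-1,1]$; reality of $K_A$ follows from $\vf A$ being real symmetric and $\Phi(t)$ Hermitian. The main obstacle I anticipate is the bookkeeping around complex-valued bases: justifying that the symmetrization in \eqref{eq:phi_integral} exactly produces the \emph{real} (even) part of the cross-correlation, and that this real part remains positive definite rather than merely the full complex autocorrelation. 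The matrix reduction and the support claim are routine once that identity and the Bochner argument are in place.
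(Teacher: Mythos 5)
Your proof is correct and follows essentially the same route as the paper's: decompose the PSD matrix $\vf A$ into rank-one terms, identify each $\vf v^T\Phi(t)\vf v$ as an autocorrelation of the zero-extended $f_{\vf v}=\sum_k v_k\phi_k$, conclude positive definiteness from the nonnegative Fourier transform (convolution theorem plus Bochner), and get the $[-1,1]$ support from the overlap region being empty for $|t|\ge 1$ — exactly the content of the paper's Lemma 1 followed by its quadratic-form expansion $K_\alpha(t)=\alpha^H\Phi(t)\alpha$ with $A=\sum_k\alpha_k\alpha_k^H$. If anything, your bookkeeping is slightly more careful than the paper's: you verify explicitly that the symmetrization in \eqref{eq:phi_integral} produces the real, even part of the cross-correlation, with spectrum $\tfrac12\left(|\hat f(\omega)|^2+|\hat f(-\omega)|^2\right)\ge 0$, which resolves the $|t|$-versus-$t$ subtlety for complex bases that the paper's lemma glosses over and its main proof dispatches with a one-line remark that the antisymmetric part of $\Phi$ does not contribute.
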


Given such a function $K_{\vf A}(t)$, we can extend its support to any closed interval $[-c,c]$ via rescaling $t$: $K_{\vf A,c}(t) = K_{\vf A}(t/c)$. We will refer to $c$ as the \emph{cutoff} to emphasize that it controls the support of the kernel, rather than its shape, and generally suppress it from notation.

It is natural to ask how many degrees of freedom the resulting parametric family of kernels possesses. For a fixed set of $M$ basis functions, the set of $M\times M$ positive semi-definite matrices is a smooth manifold of dimension $\frac{M(M+1)}{2}$, which provides an upper bound. A tight bound is determined by the following result:
\begin{theorem}
\label{thm:dim}
For strictly positive definite matrices $\vf A$, the image of the mapping $\vf A \mapsto \tr{\vf A\Phi(t)}$ is locally a smooth manifold of functions of dimension $r$, where $r$ is the number of linearly independent component functions of $\Phi_{ij}(t)$. 
\end{theorem}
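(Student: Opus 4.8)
The plan is to exploit that the map $L:\vf A\mapsto K_{\vf A}$, where $K_{\vf A}=\tr{\vf A\,\Phi(\cdot)}$, is \emph{linear} in $\vf A$, so the entire differential-geometric statement collapses to one linear-algebra fact combined with the observation that strictly positive definite matrices form an open set. I would take the domain to be the vector space $\mathcal S$ of real symmetric $M\times M$ matrices (dimension $\tfrac{M(M+1)}{2}$), inside which the strictly positive definite matrices form an \emph{open} cone $\mathcal P$; the strictness hypothesis is precisely what guarantees openness, since the full positive semi-definite cone is closed and would break the argument at its boundary.

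First I would compute the image of $L$. Expanding $K_{\vf A}(t)=\sum_{i,j}A_{ij}\Phi_{ji}(t)$ and using $A_{ij}=A_{ji}$ together with the Hermitian symmetry $\Phi_{ji}=\overline{\Phi_{ij}}$ that follows directly from \eqref{eq:phi_integral}, one gets $K_{\vf A}=\sum_i A_{ii}\Phi_{ii}+\sum_{i<j}2A_{ij}\operatorname{Re}\Phi_{ij}$, so as $\vf A$ ranges over all of $\mathcal S$ the image of $L$ is exactly the real linear span $V$ of the component functions, which by definition has dimension $r$. Next I would note that $L:\mathcal S\to V$ is a surjective linear map of finite-dimensional spaces, hence an open map (it factors as the open quotient $\mathcal S\to\mathcal S/\ker L$ composed with the isomorphism $\mathcal S/\ker L\cong V$). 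Therefore $L(\mathcal P)$ is open in $V$; since $V$ is a finite-dimensional, hence embedded, $r$-dimensional subspace of the ambient function space, an open subset of it is a smooth embedded $r$-manifold, giving the claim. Equivalently, the differential of $L$ at every point is $L$ itself and has constant rank $r$, so the constant-rank theorem yields the same conclusion and matches the ``degrees of freedom'' reading.

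The one step that requires genuine care --- and the main obstacle --- is confirming that passing from all of $\mathcal S$ to the strictly-PD subset $\mathcal P$ does not shrink the local dimension of the image below $r$. This is exactly where openness of $\mathcal P$ does the work: because $\mathcal P$ contains an open ball about each of its points and $L$ carries open sets to open sets, the tangent space of the image at every $K_{\vf A}$ with $\vf A\in\mathcal P$ is the full space $V$, so the local dimension is $r$ rather than something smaller. I would close by observing that this makes the bound sharp: the naive upper bound $\tfrac{M(M+1)}{2}$ from Theorem~\ref{thm:main} is attained precisely when the $\Phi_{ij}$ are linearly independent, and is otherwise reduced to $r=\dim V$.
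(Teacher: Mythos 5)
Your proof is correct, and its core observation --- that the map is linear, so its differential at every point is the map itself and has constant rank $r$ --- is exactly the engine of the paper's proof, which embeds $\mathcal{S}^+_N$ in the vector space $\mathcal{S}_N$ of symmetric matrices, notes that $dK_A$ equals the surjective linear map $\pi(X)=\sum_{ij}X_{ij}\Phi_{ij}(t)$ independently of $A$, and invokes the Rank Theorem (Lee, Thm.\ 4.12). Where you genuinely diverge is in your \emph{primary} argument: rather than citing the constant-rank theorem, you use the elementary facts that the strictly positive definite cone $\mathcal{P}$ is open in $\mathcal{S}$ and that a surjective linear map between finite-dimensional spaces is open, concluding that $L(\mathcal{P})$ is an open subset of the $r$-dimensional space $V$. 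This buys you something the paper does not claim: the image is \emph{globally} an embedded $r$-manifold (an open subset of a finite-dimensional subspace), which makes the paper's closing caveat about possible global pathologies such as self-intersection moot in this linear setting --- those caveats are relevant for general constant-rank maps, but a linear map cannot produce them. You also make explicit a point the paper leaves implicit: surjectivity of $L$ from \emph{symmetric} matrices onto $V$, which you justify via the Hermitian symmetry $\Phi_{ji}=\overline{\Phi_{ij}}$ inherited from the symmetrized integral \eqref{eq:phi_integral}, so that $K_{\vf A}=\sum_i A_{ii}\Phi_{ii}+\sum_{i<j}2A_{ij}\operatorname{Re}\Phi_{ij}$ spans the same space as the component functions. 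The only caution worth noting is terminological: in the complex-basis case, ``the span of the component functions'' should be read as the real span of $\{\Phi_{ii}\}\cup\{\operatorname{Re}\Phi_{ij}\}$ (as your computation in fact shows), which is the same mild imprecision present in the paper's own definition of $r$; it affects neither argument.
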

In other words, the number of degrees of freedom is determined by the number of linearly independent component functions of $\Phi_{ij}$.
Proofs of these results are in Appendix \ref{sec:proofs}.

Parameter estimation is typically performed by maximizing data likelihood with gradient-based methods. We can compute the derivative of the kernel function $K_\vf A(t)$ with respect to the parameter $\vf A$ (the $\partial_\theta \mathbf K$ factor in \eqref{eq:ll_grad}) to be:
\begin{equation}
    \nabla_\vf A K_\vf A(t) = \Phi(t)
\end{equation}
%The cutoff can be treated as a parameter to optimize with MLE as well, but care needs to be taken in implementations because the sparsity pattern will change as the cutoff changes, potentially triggering (relatively) expensive re-evaluations of the kernel matrix sparsity pattern.
\subsection{Concrete parametric families: polynomial and Fourier basis functions}
Natural choices of basis functions are those that make closed-form computation of the integrals in \eqref{eq:phi_integral} tractable. Notable examples are polynomials and exponentials. 

For example, choosing polynomials $\phi_{k}=t^k,~k=0\cdots(M-1)$ makes computation of the positive definite basis functions $\Phi_{ij}(t)$ simple to perform algorithmically via manipulation of polynomial coefficients, since polynomials are closed under integration, products, and composition. The result is that $\Phi(t)$ is a set of polynomials in $|t|$ of maximum degree $2M-1$.

Another natural choice of basis functions are complex exponentials of the form:
$$\phi_k(t) = \frac{1}{\sqrt{2}}e^{i\pi(k-1)t},~k \in 1\cdots M$$
These form the first $M$ components of a Fourier series decomposition of a function on $[-1,1]$. The correlation functions $\Phi(t)$ are:
\begin{equation}
    \Phi_{mn}(t) = \cos((m+n)\pi|t|)(1-|t|)\sinc\left[(n-m)(1-|t|)\right]
    \label{eq:fourier_corr_fns}
\end{equation}
where $\sinc(t) = \frac{\sin(\pi t)}{\pi t}$ is the normalized sinc function.
The final $\Phi$ functions are real-valued, so implementations do not require complex numbers. This basis is more parameter efficient, as the output function space dimensionality is the full $\frac{M(M+1)}{2}$ dimensionality of the parameter space, compared to a maximum dimensionality of $2M-1$ for the polynomial basis (in practice, the functions generated by the polynomial basis have even fewer degrees of freedom). In Figure \ref{fig:poly_rand} and \ref{fig:fourier_rand}, we sample kernels randomly from the polynomial and Fourier basis families to illustrate some of their qualitative characteristics.

These kernels are only valid in one dimesion, but can be extended to $\mathbb{R}^d$ via tensor products:
\begin{equation}
    K^d(\mathbf{x}) = \prod_{j=1}^d K(x_j)
\end{equation}
for a 1D kernel $K(t)$. If $K$ is compactly supported on $[-1, 1]$, $K^d$ will be compactly supported on the unit ball under the uniform norm, $[-1, 1]^d$.
\subsection{Compact approximations of non-compact kernels}
While the kernel parameter matrix can be learned directly from data by maximizing the likelihood function, a primary application of these compact kernel functions is to approximate other kernels of interest, such as Squared Exponential or Mat\'ern kernels. Truncating the kernels at some point will, in general, \emph{not} result in a well-defined positive definite kernel; this can lead to pathologies like singular kernel matrices or negative variances (when the kernel matrix has negative eigenvalues). To fix this, we can consider fitting a compact kernel directly to a target kernel $K(t)$ in a efficient manner. Due to the structure of the parameterization from Theorem \ref{thm:main}, this fitting procedure takes the form of a convex optimization problem.

The approach we take is to minimize the $L_2$ norm between the kernel functions.  The $L_2$ norm is less motivated as a loss than, say, an information divergence between the induced Gaussian processes, but it is much more tractable to compute. Assuming that we want a compact approximation with support on the interval $[-c, c]$, we want to minimize: 
\ifpreprint
    \begin{align}
        \mathcal{L}(\vf A) &=\frac{1}{2}\| \tr{\vf A\Phi(t)} - K(t) \|^2_{L_2([-c,c])} 
        \\&= \frac{1}{2}\int_{-c}^c \left(\tr{(\vf A\Phi(t/c))} - K(t)\right)^2 dt
    \end{align}
\else
    \begin{equation}
        \mathcal{L}(\vf A) =\frac{1}{2}\| \tr{\vf A\Phi(t)} - K(t) \|^2_{L_2([-c,c])} = \frac{1}{2}\int_{-c}^c \left(\tr{(\vf A\Phi(t/c))} - K(t)\right)^2 dt
    \end{equation}
\fi
We can expand this out in terms of inner products on the Hilbert space $L_2([-c,c])$ (abbreviating our approximate kernel as $\tilde K$):
\ifpreprint
    \begin{align}
        \mathcal{L}(\vf A) &=\frac{1}{2}\langle \tilde{K}, \tilde{K} \rangle - \langle \tilde{K}, K \rangle + \text{~const.} 
        \\ &= \frac{1}{2}\sum_{ijkl} R_{ijkl}\vf A_{ij}\vf A_{kl} - \sum_{ij}B_{ij}\vf A_{ij} + \text{~const.}
    \end{align}
    where the tensors $R$ and $B$ are defined by:
    \begin{align}
        R_{ijkl} &= \langle \Phi_{ij}, \Phi_{kl} \rangle = 2\int_0^c \Phi_{ij}(t/c)\Phi_{kl}(t/c) dt
        \\B_{ij} &= \langle \Phi_{ij}, K \rangle = 2\int_0^c K(t) \Phi_{ij}(t/c) dt 
    \end{align}
\else
    \begin{equation}
        \mathcal{L}(\vf A) =\frac{1}{2}\langle \tilde{K}, \tilde{K} \rangle - \langle \tilde{K}, K \rangle + \text{~const.} =  \frac{1}{2}\sum_{ijkl} R_{ijkl}\vf A_{ij}\vf A_{kl} - \sum_{ij}B_{ij}\vf A_{ij} + \text{~const.}
    \end{equation}
    where the tensors $R$ and $B$ are defined by:
    \begin{align}
        R_{ijkl} &= \langle \Phi_{ij}, \Phi_{kl} \rangle = 2\int_0^c \Phi_{ij}(t/c)\Phi_{kl}(t/c) dt, & B_{ij} &= \langle \Phi_{ij}, K \rangle = 2\int_0^c K(t) \Phi_{ij}(t/c) dt 
    \end{align}
\fi
The above integrals are fixed numeric values, and can be computed offline through numerical integration.  We found it to be helpful to enforce a peak matching constraint; that is, $\tr{\vf A\Phi(0)} = K(0)$, which amounts to a linear constraint on $A$. This effectively biases the solution to fit better near the origin, which seems to improve empirical results.

Taken together, these conditions define the following convex optimization problem:
\begin{align*}
    \textrm{Minimize~} &~\frac{1}{2}\sum_{ijkl} R_{ijkl}\vf A_{ij}\vf A_{kl} - \sum_{ij}B_{ij}\vf A_{ij}
    \\
    \textrm{Subject to:~}
    &\vf A \in \mathcal{S}_+^{M} \\
    &\tr{\vf A\Phi(0)} = K(0)
\end{align*}
We solve this optimization problem using the COSMO\cite{garstka2019cosmo} algorithm. The entire process (computing the $R_{ijkl}$ and $B_{ij}$ tensors and solving the convex problem) takes a few seconds on a modern laptop, and is completely amortized outside of GP inference.
\section{Experiments}
We analyze the errors associated with compact approximations of several kernel functions of interest:
\begin{align}
    K_\text{SE}(t) &= e^{-t^2} &\text{(Squared-exponential)} \label{eq:se_kern}\\
    K_\text{OU}(t) &= e^{-|t|} &\text{(Ornstein–Uhlenbeck)} \\
    K_\text{Mat\'ern}(t) &= e^{-\sqrt{5}|t|}\left(1 + \sqrt{5}|t| + \frac{5}{3}t^2\right)&\text{(Mat\'ern 5/2)} \\
    K_\text{sinc}(t) &= \frac{\sin (\pi t)}{\pi t}  &\text{(Sinc/Band limited)}\label{eq:sinc_kern}
\end{align}
The first three are common kernels in GP literature; the fourth, the sinc kernel, has been suggested in \cite{tobar2019band} as an appropriate kernel for applying Gaussian processes to signal processing tasks. This kernel seems challenging to approximate, due to its $1/|t|$ decay rate inducing much longer-range correlations than other common kernels.
%\subsection{Approximation errors of common kernels}

\textbf{Compact approximation errors} We show compact approximations of these kernels for a constant cutoff parameter value of 5 in Figure \ref{fig:approx}. Kernels generated from the Fourier basis generally perform better, likely due to having more degrees of freedom at a given order. The exception is the Ornstein-Uhlenbeck kernel, where it is unable to replicate the sharp peak at zero.
\begin{wrapfigure}{L}{0.35\linewidth}
    \centering
%   \begin{center}
    \includegraphics[width=\linewidth]{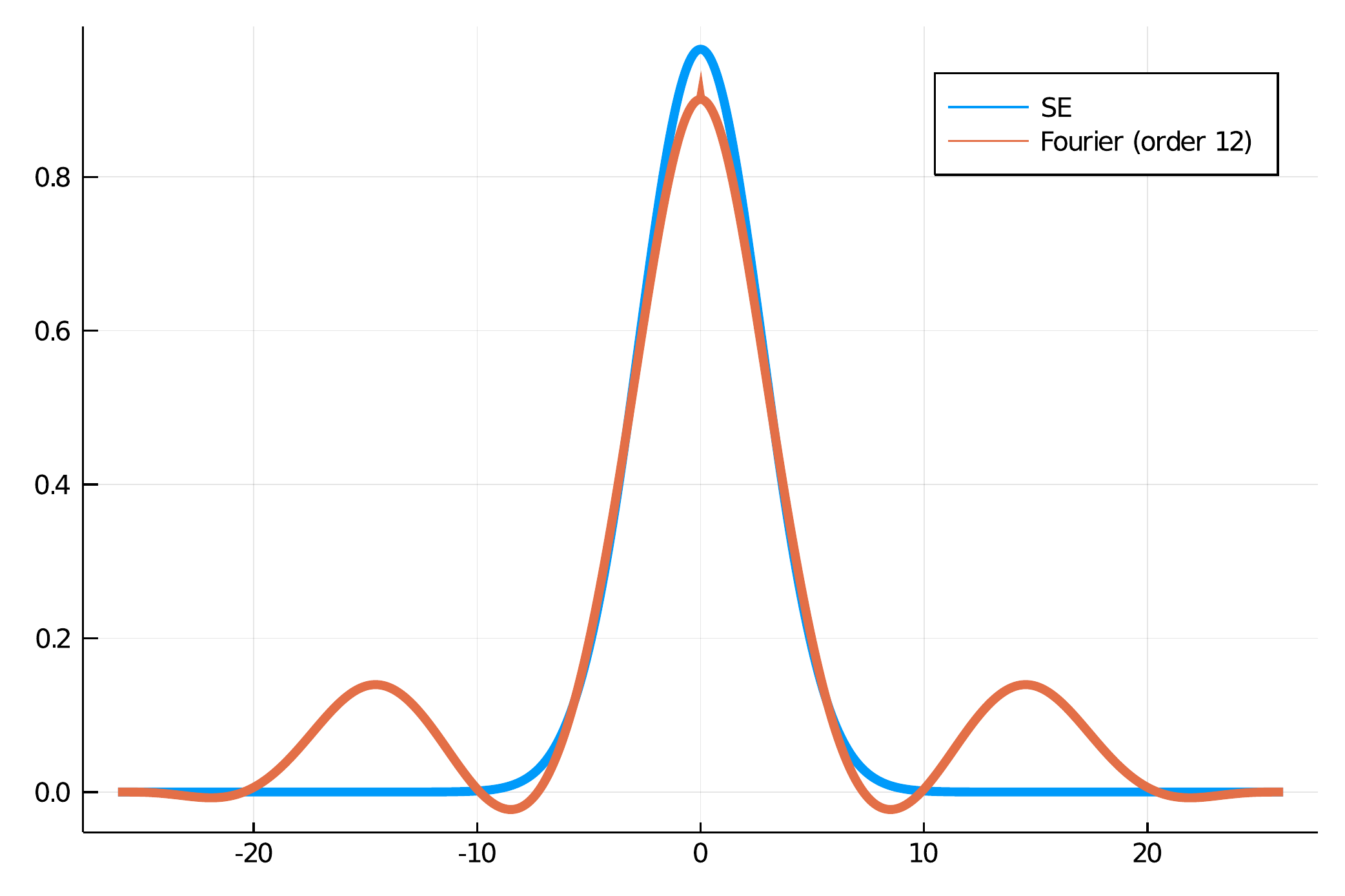}
%   \end{center}
    \caption{Best non-compact kernel (SE) and best optimized compact kernel when trained on TIMIT data.}
    \label{fig:best_kern}
\end{wrapfigure}

\begin{figure}
    \centering
    \begin{subfigure}[t]{0.24\linewidth}
    \centering
    \includegraphics[width=\linewidth]{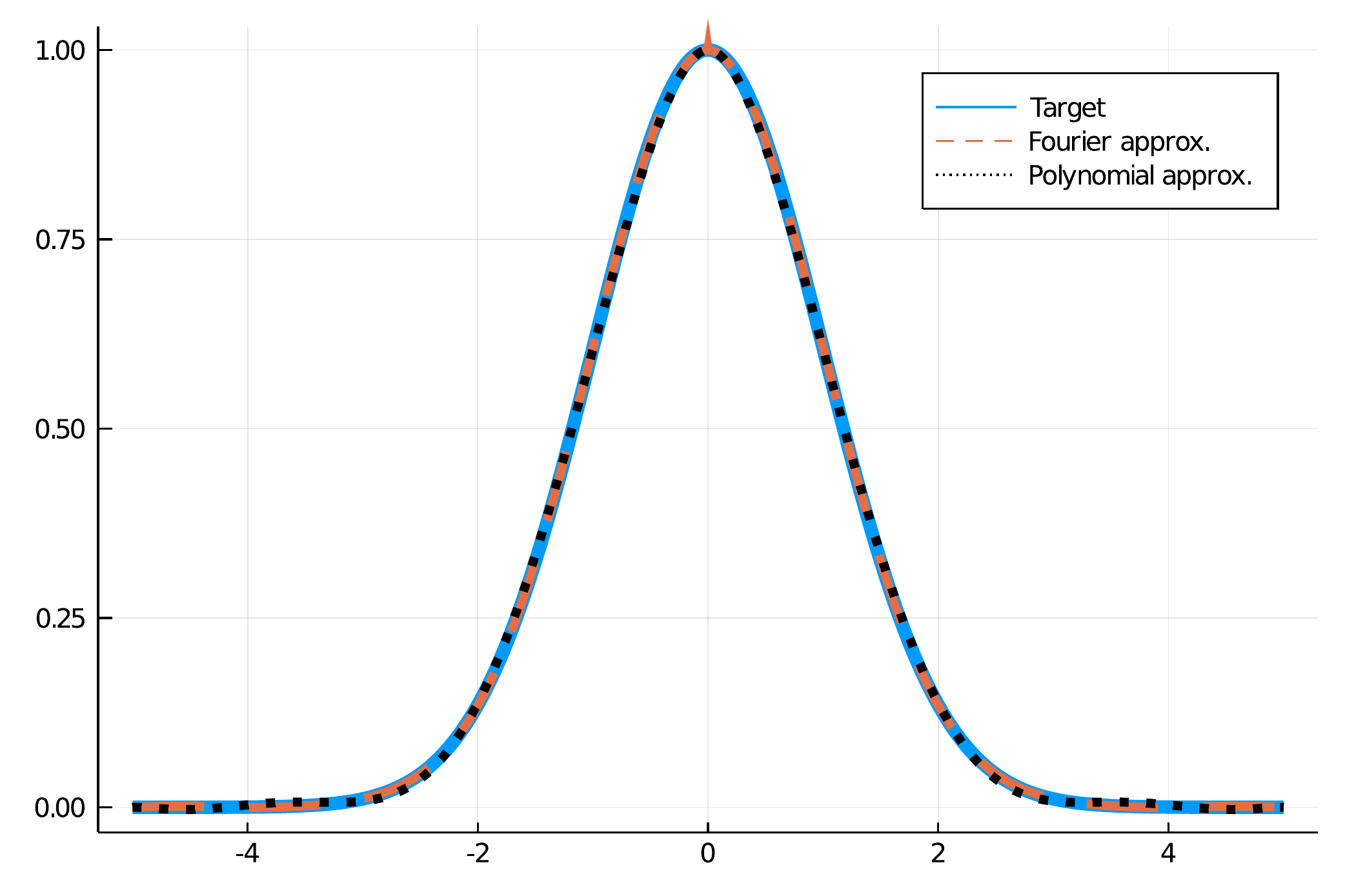}
    % \caption{\centering SE kernel \\ $\epsilon_F$=7.3E-6\\$\epsilon_P$=2.9E-4}
    \caption{\tabular[t]{@{}l@{}}SE kernel \\ $\epsilon_F$=7.3E-6\\$\epsilon_P$=2.9E-4\endtabular}
    \end{subfigure}
    \hfill
    \begin{subfigure}[t]{0.24\linewidth}
    \centering
    \includegraphics[width=\linewidth]{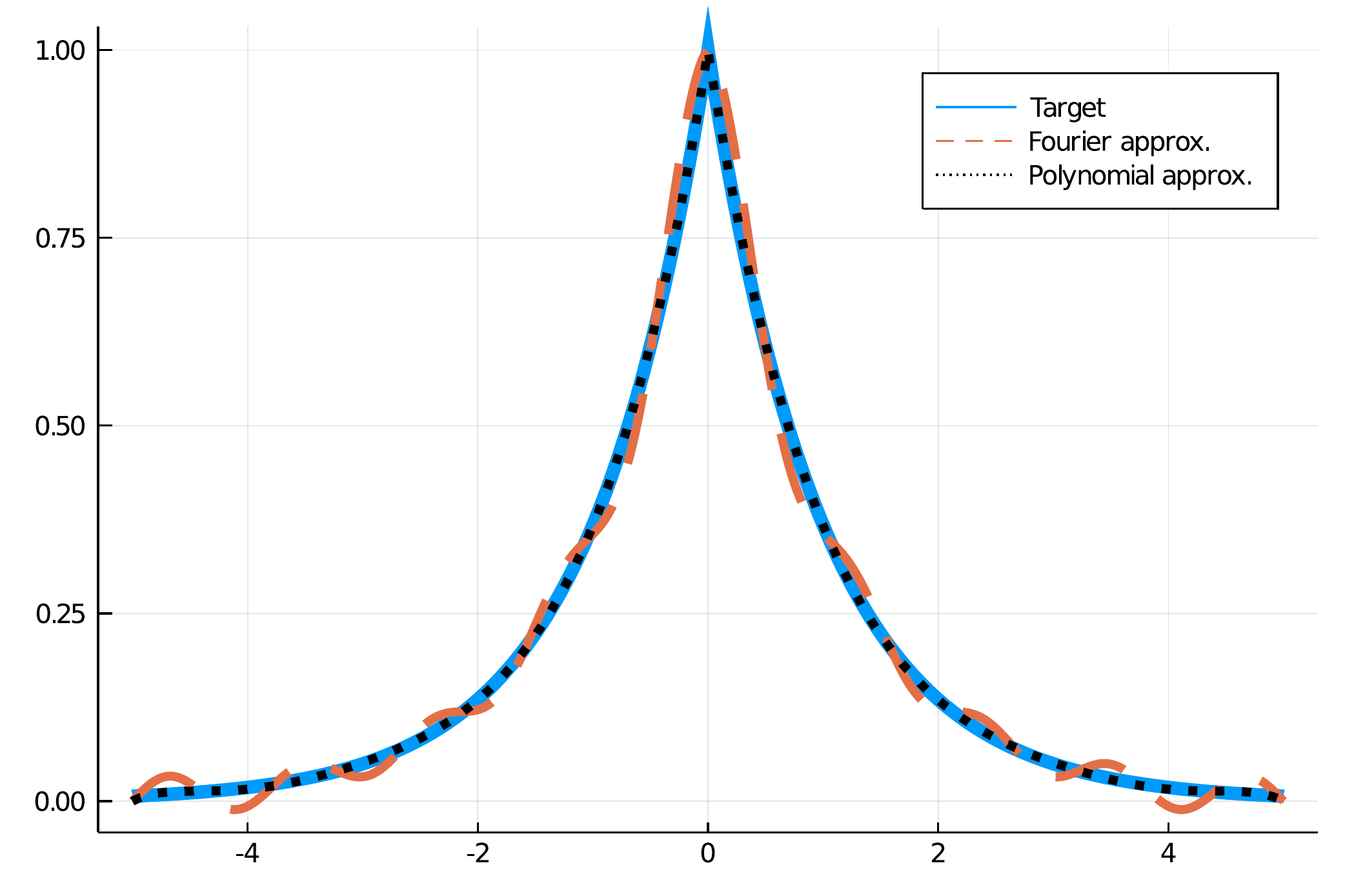}
    % \caption{\centering OU kernel \\ $\epsilon_F$=6.1E-4\\$\epsilon_P$=3.3E-5}
    \caption{\tabular[t]{@{}l@{}}OU kernel \\ $\epsilon_F$=6.1E-4\\$\epsilon_P$=3.3E-5\endtabular}
    \end{subfigure}
    \begin{subfigure}[t]{0.24\linewidth}
    \centering
    \includegraphics[width=\linewidth]{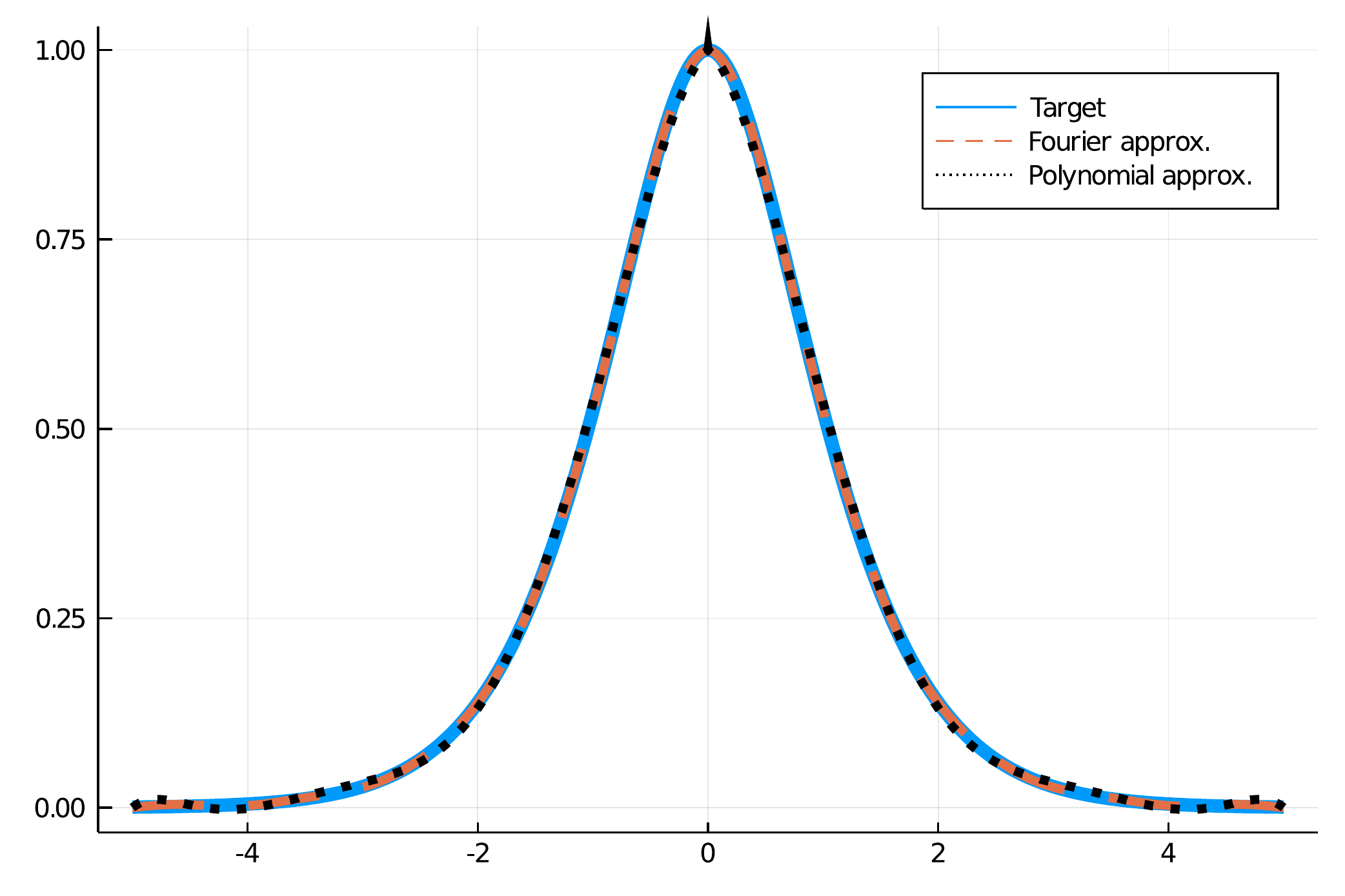}
    % \caption{\centering Mat\'ern kernel \\ $\epsilon_F$=1.1E-5\\$\epsilon_P$=5.1E-4}
    \caption{\tabular[t]{@{}l@{}}Mat\'ern kernel \\ $\epsilon_F$=1.1E-5\\$\epsilon_P$=5.1E-4\endtabular}  
    \end{subfigure}
    \hfill
    \begin{subfigure}[t]{0.24\linewidth}
    \centering
    \includegraphics[width=\linewidth]{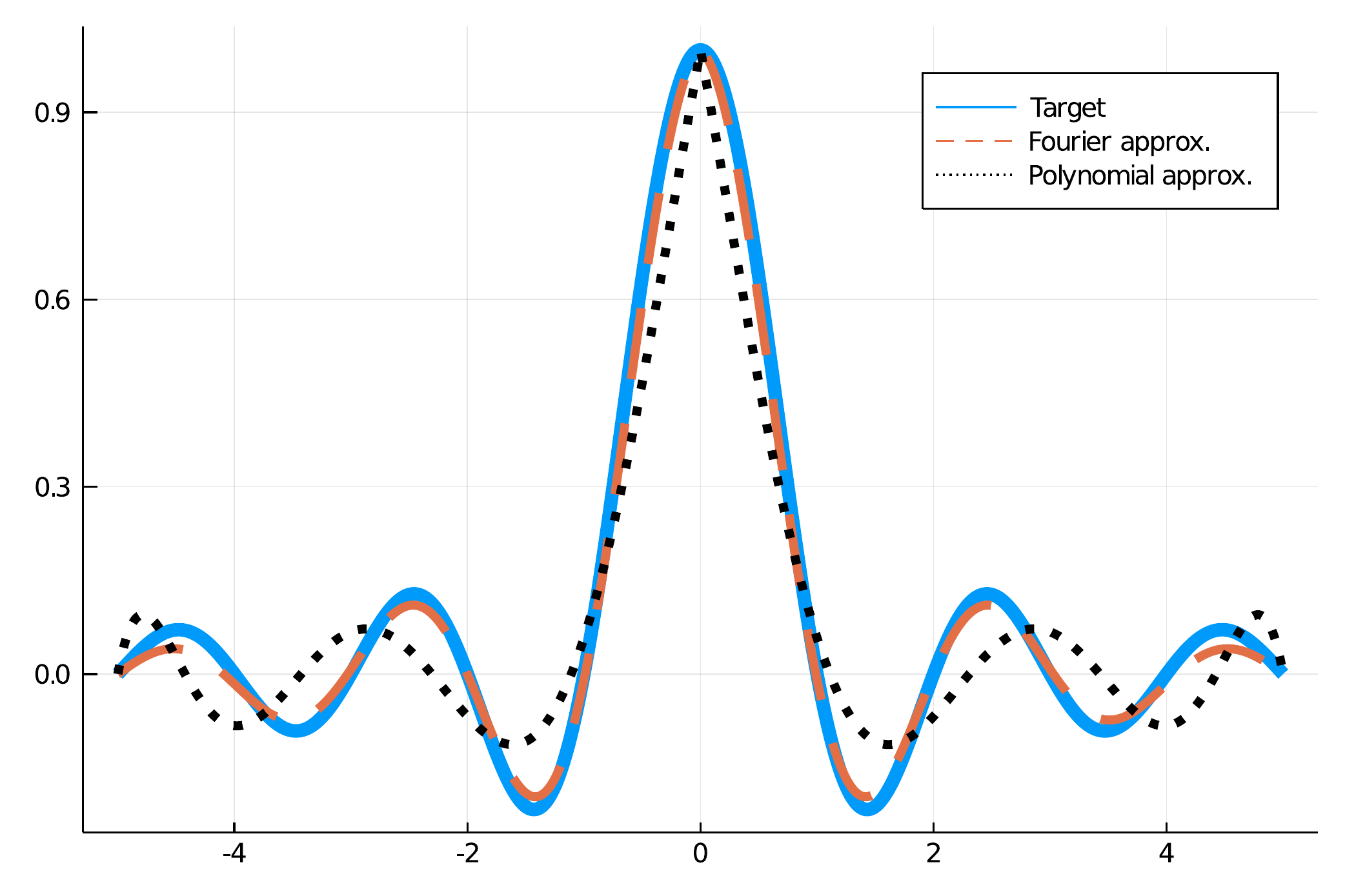}
    % \caption{\centering Sinc kernel \\ $\epsilon_F$=4.0E-3\\$\epsilon_P$=8.0E-2}
    \caption{\tabular[t]{@{}l@{}}Sinc kernel \\ $\epsilon_F$=4.0E-3\\$\epsilon_P$=8.0E-2\endtabular} 
    \end{subfigure}
    \caption{Least-squared approximations of kernel functions \eqref{eq:se_kern}-\eqref{eq:sinc_kern} for order 5 compact parametric kernels. $\epsilon_F, \epsilon_P$ are $L_2$ errors for Fourier and polynomial basis kernels, respectively.}
    \label{fig:approx}
\end{figure}
In order to evaluate how well compact kernels approximate fixed kernels, we perform experiments on data sampled from known kernels. In Figure \ref{fig:rmse_nll_synth}, we generate random problems from the four kernels \eqref{eq:se_kern}-\eqref{eq:sinc_kern} with 1024 training instances. We then compare training NLL scores as well as predicted RMSE on a hold-out test set of the original kernel along with a compact approximation, fit using the Fourier basis functions \eqref{eq:fourier_corr_fns}. We can see that, as expected, we pay a model mismatch price both in terms of RMSE and NLL. The sinc kernel had by far the largest approximation error, which is unsurprising due to its slow decay rate.
\begin{figure}[b]
    \centering
    \begin{subfigure}[t]{0.45\linewidth}
    \includegraphics[width=\linewidth]{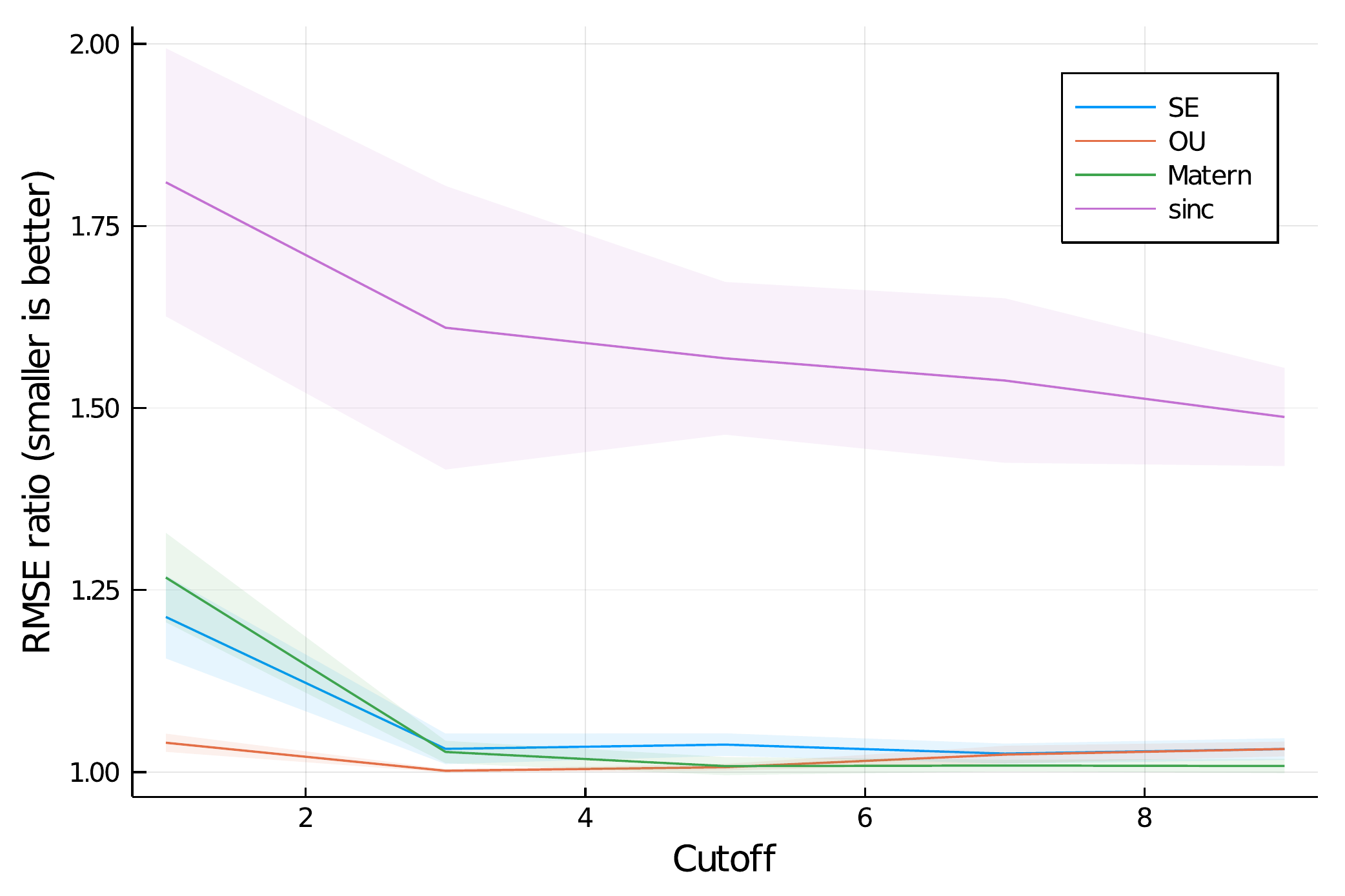}
    \caption{Test RMSE ratios}
    \end{subfigure}
    \hfill
    \begin{subfigure}[t]{0.45\linewidth}
    \includegraphics[width=\linewidth]{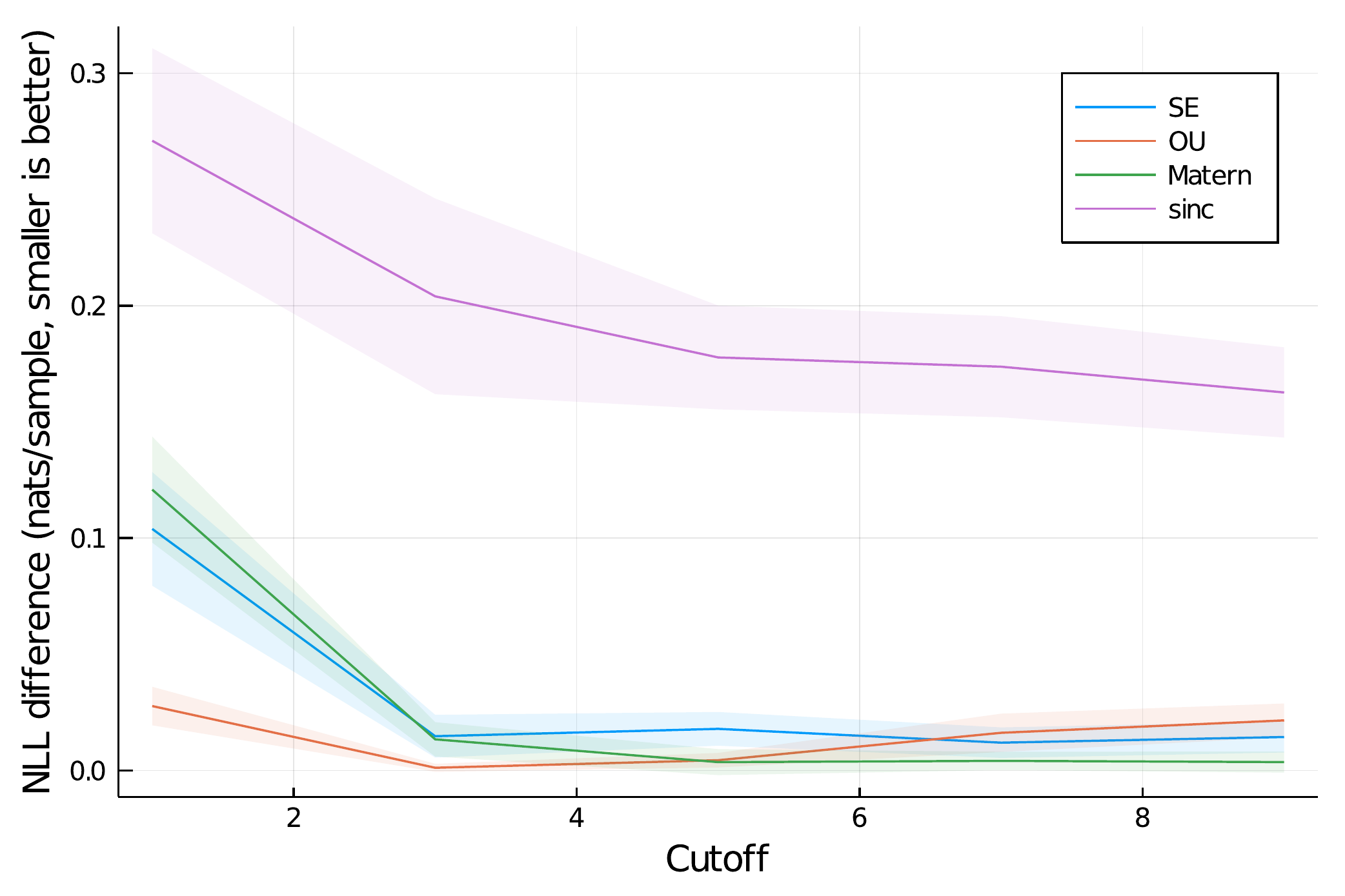}
    \caption{Training NLL differences}
    \end{subfigure}
    \caption{Approximation errors from modeling data sampled from known kernels with compact approximations of those kernels.}
    \label{fig:rmse_nll_synth}
\end{figure}

\textbf{Performance gains} In Figure \ref{fig:performance}, we compare the run time performance of posterior mean inference as a function of number of training points. We generated random GP inference problems with number of training examples ranging from 64 to 25000. We then timed the posterior mean calculation using both dense and sparse linear algebra. Not included in the time calculation was the kernel matrix formation (in the dense case) or the sparsity pattern calculation (in the sparse case); these both dominate in the low $N$ regime, so including them would make it more difficult to measure how the runtime scales. Additionally, the sparsity pattern calculations are amortized away when performing tasks like gradient-based likelihood maximization which can re-use the same sparsity pattern for each gradient calculation step.

\begin{figure}
    \centering
    \begin{subfigure}[t]{0.48\linewidth}
        \centering
        \includegraphics[width=\linewidth]{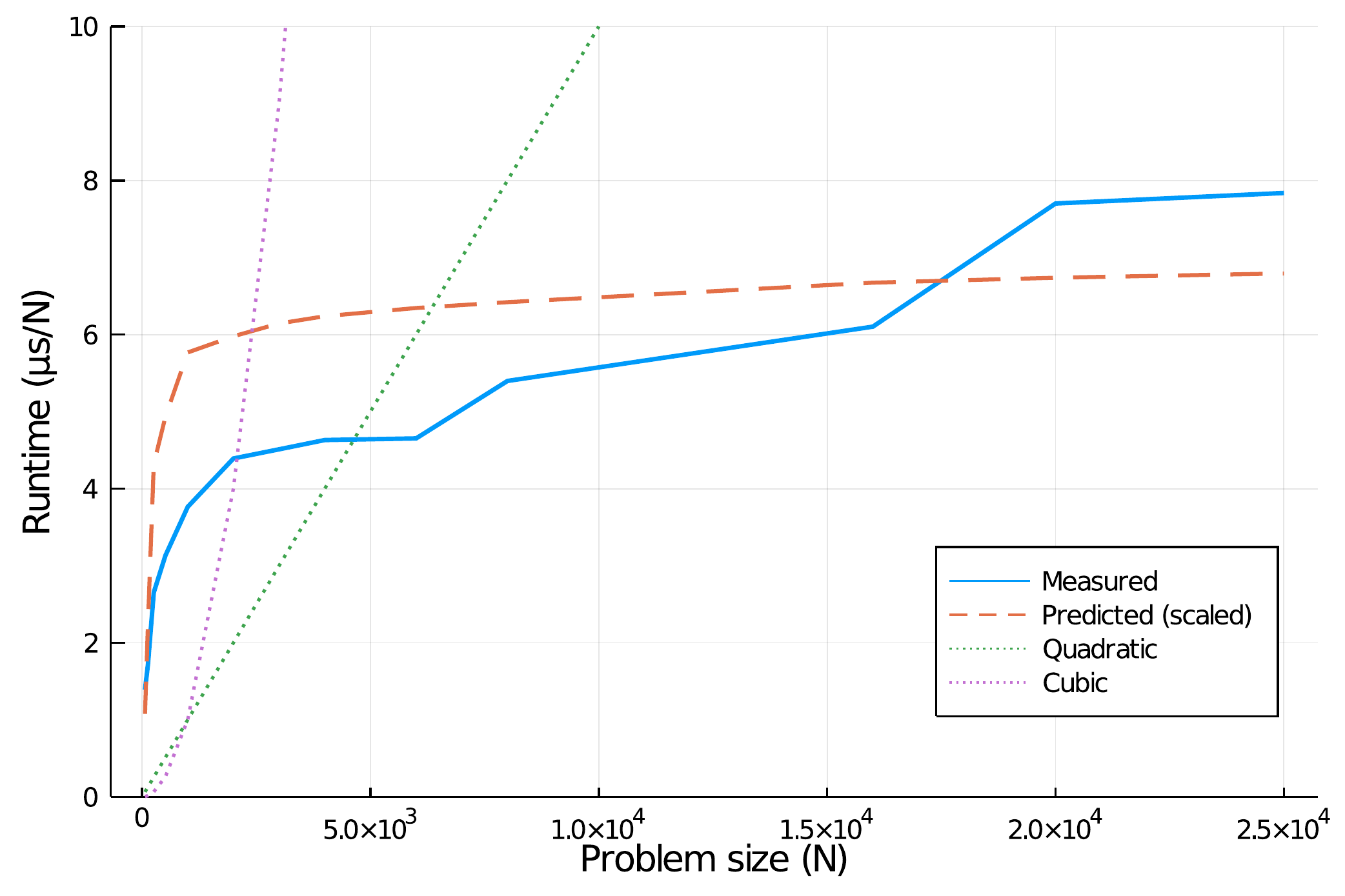}
        \caption{}
        \label{fig:perf_a}
    \end{subfigure}
    \hfill
    \begin{subfigure}[t]{0.48\linewidth}
        \centering
        \includegraphics[width=\linewidth]{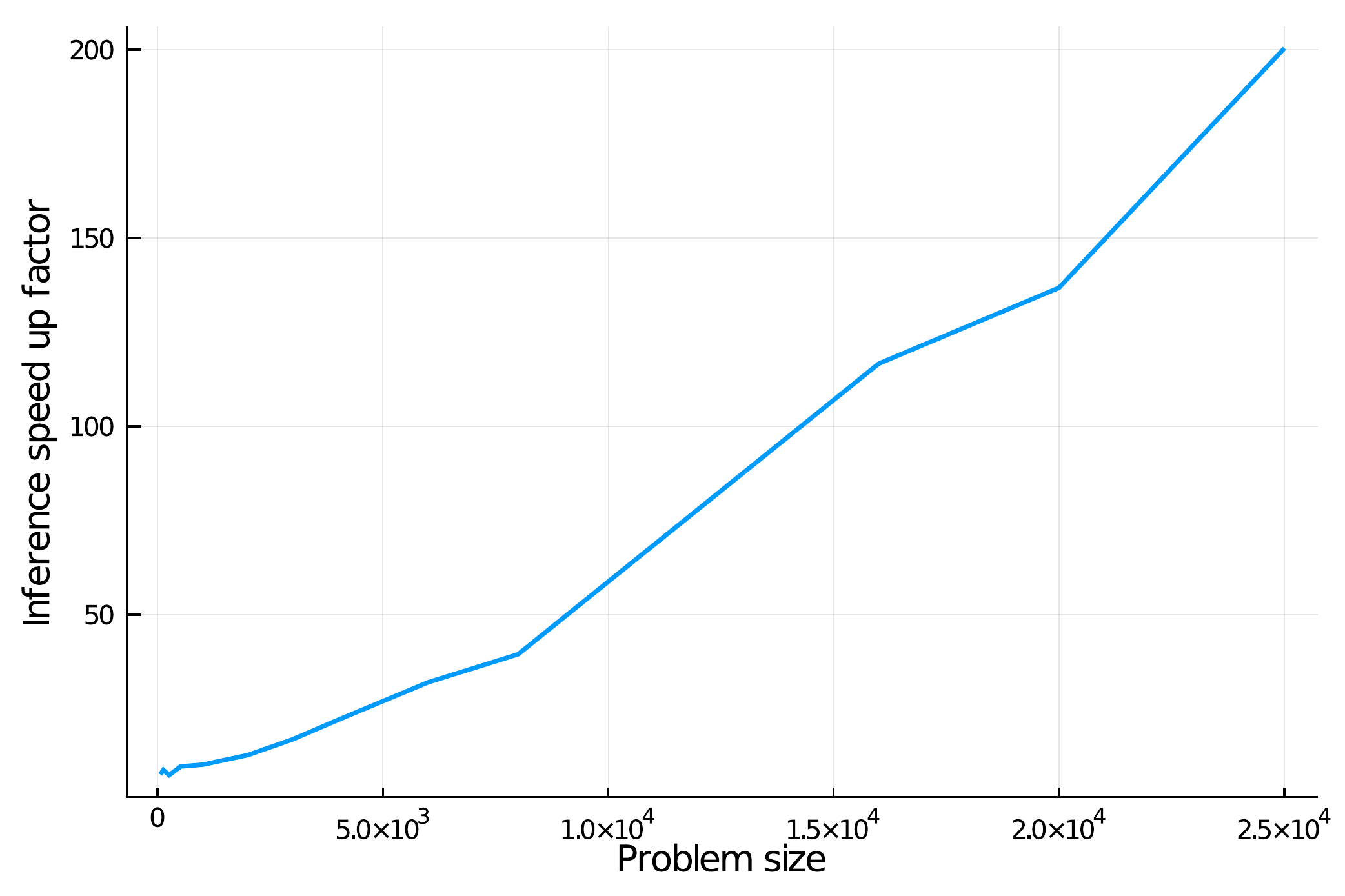}
        \caption{}
        \label{fig:perf_b}
    \end{subfigure}
        \caption{(a) Scaling characteristics of posterior mean inference with compact kernels vs. non-compact kernels for synthetic time series data. Values are scaled by $1/N$ to illustrate scaling behavior. The ``predicted'' line is proportional to $\text{(\# of non-zero kernel entries)(\# of CG iterations)}/N$, which is the number of multiplies required to invert the kernel matrix with CG. (b) Empirical runtime ratio of dense inference vs. sparse inference.}
    \label{fig:performance}
\end{figure}
\begin{table}[]
    \centering
    \small
    \begin{tabular}{lcccccc}
    \toprule
    {} & \multicolumn{3}{c}{Initial} & \multicolumn{3}{c}{Tuned}
    \\ \cmidrule(lr){2-4} \cmidrule(lr){5-7}
    Kernel & Train NLL & Test NLL & Test RMSE & Train NLL & Test NLL & Test RMSE
 \\\midrule
 SE & 0.86±0.037 & 0.86±0.037 & 0.34±0.022 & 0.25±0.1 & 0.25±0.11 & 0.14±0.014
 \\ SE (approx) & 0.81±0.025 & 0.81±0.025 & 0.27±0.018 & 0.3±0.086 & 0.3±0.086 & 0.16±0.016
 \\\hline OU & 0.77±0.04 & 0.77±0.04 & 0.27±0.019 & 0.73±0.05 & 0.73±0.05 & 0.26±0.019
 \\ OU (approx) & 0.59±0.03 & 0.59±0.029 & 0.21±0.016 & 0.62±0.052 & 0.62±0.015 & 0.23±0.018
 \\\hline Matern & 0.5±0.034 & 0.5±0.034 & 0.19±0.014 & 0.27±0.074 & 0.27±0.074 & 0.14±0.014
 \\ Matern (approx) & 0.48±0.029 & 0.48±0.029 & 0.18±0.015 & 0.34±0.072 & 0.34±0.071 & 0.16±0.016
 \\\hline Sinc & 17±3 & 17±3 & 0.3±0.03 & 0.59±0.089 & 0.61±0.098 & 0.22±0.021
 \\ Sinc (approx) & 0.53±0.061 & 0.53±0.06 & 0.23±0.021 & 0.78±0.03 & 0.78±0.03 & 0.28±0.019
 \\\bottomrule
    \end{tabular}
    \caption{Comparison of non-compact kernels with their compact approximations, before and after tuning the kernel scale and length scale parameters of the target kernels.}
    \label{tab:timit}
\end{table}

\textbf{Real data}
We perform experiments on a dataset of 380 audio files from the TIMIT\cite{garofolo1993darpa} speech dataset using the kernels \eqref{eq:se_kern}-\eqref{eq:sinc_kern}, the Wendland polynomials in \eqref{eq:wendland_polys}, and our compact parametric kernels using the Fourier basis. Each audio file is pre-processed
\ifpreprint\else
(details in Appendix \ref{sec:experiment_details})
\fi, then 50\% of samples are assigned to a training set and 50\% to a test set.
 The four non-compact kernels are parameterized by kernel scale and length scale. The compactly supported kernels have a cutoff, so the Wendland kernels are parameterized by a single scale parameter and the Fourier basis kernels are parameterized by a positive semidefinite parameter matrix. We then compute NLL values on a held out test set as well as reconstruction RMSE. The results in Table \ref{tab:timit2} show that our parametric kernel families outperform both the Wendland polynomials as well as the baseline non-compactly supported kernels in terms of training NLL, testing NLL, and testing RSME. The best kernel found is shown in Figure \ref{fig:best_kern}; qualitatively, it is similar to the Lanczos functions, which are widely used for interpolation in image processing (although the Lanczos functions are not positive definite).
\begin{table}[b]
    \centering
    \small
    \begin{tabular}{lccc}
    \toprule
        Kernel & Train NLL & Test NLL & Test RMSE
        \\ \midrule SE (Best non-compact) & \textbf{0.25±0.1} & \textbf{0.25±0.11} & \textbf{0.14±0.014}
 \\ \hline $w_1$ \eqref{eq:wendland_polys} & 1.0±0.068 & 1.0±0.066 & 0.42±0.031
 \\ $w_2$ \eqref{eq:wendland_polys} & 0.53±0.11 & 0.52±0.11 & 0.17±0.016
 \\ $w_3$ \eqref{eq:wendland_polys} & 0.82±0.15 & 0.82±0.15 & 0.2±0.02
 \\ $w_4$ \eqref{eq:wendland_polys} & 0.85±0.15 & 0.86±0.15 & 0.21±0.021
 \\ \hline Fourier (order 3) & 0.51±0.069 & 0.5±0.067 & 0.23±0.022
 \\ Fourier (order 5) & \textbf{0.25±0.093} & \textbf{0.25±0.094} & 0.15±0.016
 \\ Fourier (order 8) & \textbf{0.2±0.071} & \textbf{0.19±0.068} & \textbf{0.13±0.013}
 \\ Fourier (order 12) & \textbf{0.19±0.089} & \textbf{0.19±0.09} & \textbf{0.13±0.013}
 \\ \bottomrule
    \end{tabular}
    \caption{Performance of compact kernels on TIMIT audio files after likelihood optimization. The best non-compact kernel (SE) from Table \ref{tab:timit} is shown for comparison. Significance indicated with a one-sided t-test with a 5\% p-value threshold.}
    \label{tab:timit2}
\end{table}
\section{Discussion}
In this work, we presented a generic method for generating parametric families of compactly-supported kernels, with two specific examples. These families can be fit to data directly, or used to find sparse approximations of existing translation-invariant kernels using convex optimization. The flexibility of the parameterization yields good fits to data, even compared to commonly used non-compact kernels which can model much longer range dependencies. Additionally, we showed empirically that for a typical time series-like task where the sparsity can be controlled (e.g., the number of non-zero kernel matrix entries grows as $O(N)$), we can perform kernel matrix inversion in sub-quadratic time with sparse linear algebra. 

We believe that these kernels are attractive models for low-dimensional applications such as time series analysis, signal processing, and modeling spatial data. These types of datasets typically have strong local correlations (due to the underlying features representing a physical quantity like time or space), which makes a compact kernel a reasonable inductive bias. They also tend to have relatively regular spacing between data points, which allows for very sparse kernel matrices and enables fast inference. We suspect that they will be less competitive when modeling higher dimensional data (such as general regression or classification problems on feature spaces). These applications have more complex feature space geometry, and the curse of dimensionality makes controlling the kernel matrix sparsity more challenging. However, there are many low-dimensional application areas where scalable GP inference is necessary to process large datasets; these include industries such as finance and geospatial modeling. Our hope is that the methods proposed in the paper, combined with fast GPU implementations such as GPyTorch\cite{gardner2018gpytorch}, can widen the adoption of GPs in these fields.
\ifpreprint
\else
\newpage
\section*{Broader Impact}
    %The authors anticipate no broader societal impact of this work.
    Compact parametric kernels have broad potential application in industry for processing time series or spatial data at scale. Furthermore, by reducing inference and training complexity, they make these fully-Bayesian models more accessible to researchers and practitioners who do not have access to expensive compute infrastructure. Since the work is primarily mathematical, there are no obvious direct negative social impacts.
\fi
\begin{ack}
% This work was funded by DARPA and AFRL under the DARPA "Learning with Less Labels" (LwLL) program.
 This material is based upon work supported by the United States Air Force under Contract No. FA8750-19-C-0515. 
\end{ack}
% \nocite{*}
\nocite{stein2012interpolation}
\nocite{lee2013smooth}
\bibliographystyle{unsrt}  
\bibliography{references}  %%% Remove comment to use the external .bib file (using bibtex).
\appendix
\newpage
\section{Proofs}
\label{sec:proofs}
\subsection{Proof of Theorem 1}
First, we prove the following lemma:
\begin{lemma}
    Let $f$ be a measurable function $[-1, 1] \rightarrow \mathbb{C}$ Then, the function $h_f$ given by:
    \begin{equation}
        \label{eq:autocorr}
        h_f(t) = \intbounds f(x)^*f(x + 2|t|) dx
    \end{equation}
    is positive definite and supported on $[-1, 1]$.
\end{lemma}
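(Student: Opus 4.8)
The plan is to realize $h_f$ as an ordinary autocorrelation evaluated at a reparametrized argument, so that positive definiteness follows from the fact (already invoked in the text via the convolution theorem and Bochner's theorem) that autocorrelations are positive definite. First I would extend $f$ to $g:\mathbb{R}\to\mathbb{C}$ by setting $g(x)=f(x)$ on $[-1,1]$ and $g(x)=0$ otherwise, and define the autocorrelation $R(s)=\int_{\mathbb{R}} g(x)^* g(x+s)\,dx$. Computing the overlap of $\mathrm{supp}\,g$ with its translate shows that for $s\ge 0$ the effective limits are $R(s)=\int_{-1}^{1-s} g(x)^* g(x+s)\,dx$; comparing with the definition of $h_f$ (with $s=2|t|$) yields the identity $h_f(t)=R(2|t|)$.

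For the support claim I would argue directly from this identity: $g$ and $g(\cdot+s)$ have disjoint supports once $|s|>2$, so $R(s)=0$ there, and hence $h_f(t)=R(2|t|)=0$ whenever $2|t|>2$, i.e. whenever $|t|>1$. Equivalently, for $|t|\ge 1$ the upper limit $1-2\min\{|t|,1\}=-1$ collapses the integral to zero. This gives compact support on $[-1,1]$.

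The positive-definiteness step is where the real work lies. By the convolution theorem $\widehat{R}(\omega)=|\widehat g(\omega)|^2\ge 0$, so $R$ is positive definite on $\mathbb{R}$ by Bochner's theorem; moreover, being an autocorrelation, $R$ is Hermitian, $R(-s)=\overline{R(s)}$, hence real and even when $f$ is real-valued. The key reduction is then to remove the absolute value: because $R$ is even, $R(2|t|)=R(2t)$, so $h_f(t)=R(2t)$. Finally, positive definiteness is preserved under dilation of the argument, since for any nodes $t_1,\dots,t_n$ and coefficients $c_1,\dots,c_n$ the form $\sum_{j,k} c_j\overline{c_k}\,R\big(2(t_j-t_k)\big)$ is exactly the positive-definiteness quadratic form for $R$ evaluated at the rescaled nodes $2t_j$; therefore $h_f(t)=R(2t)$ is positive definite and compactly supported.

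I expect the reconciliation of $R(2|t|)$ with the dilation $R(2t)$ to be the main obstacle, because this is precisely where the evenness (and hence real-valuedness) of the autocorrelation is needed. For a genuinely complex $f$ the raw quantity $R(2|t|)$ is even in $t$ but generally complex, which already conflicts with the necessary Hermitian symmetry $\phi(-t)=\overline{\phi(t)}$ of a positive-definite function; so I would phrase the argument around the even (equivalently, real) part of $R$. This is exactly the object produced by the symmetrization $\tfrac12(\phi_i^*\phi_j+\phi_j\phi_i^*)$ in Theorem \ref{thm:main} once the trace against a real positive semi-definite matrix is taken, which legitimizes the reduction $R(2|t|)=R(2t)$ and makes the positive-definiteness conclusion clean.
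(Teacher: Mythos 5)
Your proposal follows the same route as the paper's own proof: zero-extend $f$ to $\mathbb{R}$, identify $h_f$ with the autocorrelation $R$ of the extension at a rescaled argument, obtain $\widehat{R}(\omega)=|\widehat{g}(\omega)|^2\ge 0$ from the convolution theorem, invoke Bochner's theorem, and read off the support $[-1,1]$ from the overlap of $\operatorname{supp} g$ with its translate. Where you differ is in the step the paper performs silently: in restricting the domain of integration, the paper replaces $f(x+t)$ by $f(x+|t|)$, i.e.\ it identifies $R(2t)$ with $R(2|t|)$, and this identification is valid only when $R$ is even --- equivalently, essentially only when $f$ is real-valued, since in general an autocorrelation satisfies only the Hermitian symmetry $R(-s)=\overline{R(s)}$. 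Your diagnosis here is correct and in fact exposes a genuine flaw in the lemma \emph{as stated} for complex $f$: the quantity $h_f(t)=R(2|t|)$ is even in $t$ but generally complex-valued, so it cannot satisfy the symmetry $\phi(-t)=\overline{\phi(t)}$ that every positive definite function must have. A concrete counterexample is $f(x)=e^{iax}$ on $[-1,1]$, which gives $h_f(t)=2e^{2ia|t|}(1-|t|)_+$: its real part $2\cos(2at)(1-|t|)_+$ is positive definite, but the nonvanishing even imaginary part rules out positive definiteness of $h_f$ itself. Your closing observation is also the right repair, and it is exactly what the paper's Theorem~\ref{thm:main} does implicitly: the symmetrization in \eqref{eq:sym_mat} replaces the raw autocorrelation by its real (even) part $\frac{1}{2}\bigl(R(2t)+\overline{R(2t)}\bigr)$, which is positive definite as the average of $R$ and its reflection-conjugate, so the main theorem is unaffected. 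In short, your proof is complete for real $f$, correctly identifies why the complex case fails, and is more careful than the paper's own argument on precisely the point where the paper's proof is loose; the lemma should either restrict to real-valued $f$ or assert positive definiteness of the real part of $h_f$. (One minor point shared by both proofs: mere measurability of $f$ is not enough for the autocorrelation to be defined; square-integrability on $[-1,1]$, as assumed in Theorem~\ref{thm:main}, should be carried into the lemma's hypotheses.)
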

\begin{proof}
    Given such an $f$, we can extend it to all of $\mathbb{R}$ by:
    \begin{equation}
        \tilde f(t) = \begin{cases} f(t) & |t| \le 1 \\
                                    0 & |t| > 1
                                    \end{cases}
    \end{equation}
    The autocorrelation of $\tilde f$ with itself is giv    n by:
    \begin{equation}
        \tilde h(t) = \int_{\mathbb{R}} \tilde f(x)^*\tilde f(x + t) dx
    \end{equation}
    Denoting $\tilde H(\omega)$ and $\tilde F(\omega)$ as the Fourier transforms of $\tilde h,~\tilde f$, respectively, we have:
    \begin{equation}
        \tilde H(\omega) = |\tilde F(\omega)|^2 \ge 0
    \end{equation}
    Thus, $h(t)$ is a positive definite function via Bochner's theorem\cite{stein2012interpolation}.
    Since $\tilde f$ is compactly supported, we can restrict the domain of integration to the support of the integrand: 
    \begin{align}
        \tilde h(t) &= \int_{\operatorname{max}\{|x|,|x + t|\} < 1}  f(x)^*f(x + t) dx
        \\
        &= \int_{-1}^{1-\min\{|t|,1\}} f(x)^*f(x + |t|) dx
    \end{align}
    It is now easy to check that $\tilde h$ is compactly supported on $[-2, 2]$. We re-scale $t$ to compress the support to $[-1, 1]$:
    \begin{equation}
        h_f(t) = \tilde h(2t) = \int_{-1}^{1-\min\{|t|,1\}}f(x)^*f(x + 2|t|)  dx
    \end{equation}
    Since positive-definiteness is preserved under constant rescaling of the domain, $h_f$ is a positive-definite function supported on $[-1, 1]$.
\end{proof}
% \begin{lemma}
%     Let $f, g$ be complex-valued functions. Then:
%     \begin{equation}
%       S(t) = \int f^*(x)g(x+t) + f(x+t)g^*(x) dx
%     \end{equation}
%     is real-valued for any values of $t \in \mathbb{R}$ where the integral is defined.
% \end{lemma}
% \begin{proof}
%     Decompose $S(t)$ into:
%     \begin{align}
%         S(t) &= P(t) + Q(t) \\
%         P(t) &= \int f^*(x)g(x+t) dx
%         \\
%         Q(t) &= \int f(x+t)g^*(x) dx
%     \end{align}
%     We can perform a change of variables in the $Q$ integral to yield:
%     x' =  x + t
%     x = x' - t
%     \begin{align}
%         Q(t) &= \int f(x')g^*(x'- t) dx' \\
%              &= P^*(-t)
%     \end{align}
% \end{proof}
Next, we can prove the main result:
\begin{proof}
    Let $\{\phi_i\}_{i=1\cdots N}$ be a finite collection of basis functions. If we consider a family of functions that are linear combinations of these basis functions (e.g., $f_\alpha(t) = \sum_i \alpha_i \phi_i(t)$), we can see that:
    \begin{align}
        K_\alpha(t) &= \int_{-1}^{1-\min\{|t|,1\}} \left(\sum_i \alpha_i \phi_i(x)\right)^*\left(\sum_i \alpha_i \phi_i(x+2|t|)\right) dx \\
        &= \sum_{i,j} \alpha_i^*\alpha_j \left(\int_{-1}^{1-\min\{|t|,1\}}\phi_i(x)^*\phi_j(x+2t) dx \right)\label{eq:sym_mat_ante}\\
        &= \sum_{i,j} \alpha_i^*\alpha_j \left(\frac{1}{2}\int_{-1}^{1-\min\{|t|,1\}}\left[\phi_i(x)^*\phi_j(x+2t) + \phi_i^*(x+2t)\phi_j(x)\right] dx \right)\label{eq:sym_mat}\\
        &= \alpha^H \Phi(t) \alpha\label{eq:Phi}
    \end{align}
    where $\Phi_{mn}(t)$ is given by the ``symmetric correlation'' of $\phi_m$ and $\phi_n$. The symmetrizing step \eqref{eq:sym_mat_ante} $\rightarrow$ \eqref{eq:sym_mat} is justified because the anti symmetric component of $\Phi$ does not contribute to the value of $K_\alpha$. Furthermore, since \emph{sums} of positive definite functions are positive definite, we can extend this family to sums of these, which are parameterized by a positive semi-definite (PSD) matrix $A$:
    \begin{equation}
        K_A(t) = \sum_{k} \left( \alpha_k^H\Phi(t)\alpha_k \right)  = \tr{A\Phi(t)},\text{~where~} A = \sum_k \alpha_k\alpha_k^H
    \end{equation}
    Here, $A$ can be either real (symmetric) or complex (Hermitian); if $\Phi(t)$ is real-valued, then only the real part of $A$ contributes to the value of $K_A(t)$.
\end{proof}
\subsection{Proof of Theorem \ref{thm:dim}}
\begin{proof}
Let $\mathcal{S}_N$ be the $m = \frac{N(N+1)}{2}$ dimensional vector space of $N\times N$ symmetric matrices, and let $V = \operatorname{span} \{ \Phi_{ij}(t) \}$, i.e. the $r$-dimensional vector space (with $r \le m$) of all possible linear combinations of the functions $\Phi_{ij}$. Then the mapping $\pi(X) = \sum_{ij}X_{ij}\Phi_{ij}(t)$ is a surjective linear map from $\mathcal{S}_N\rightarrow V$. We can think of $K_A$ as a smooth function from $\mathcal{S}^+_N \rightarrow V$ by embedding $\mathcal{S}^+_N$ in $\mathcal{S}_N$, and its differential (Jacobian) is given by:
\begin{equation}
    dK_A(dA) = \sum_{ij} dA_{ij} \Phi_{ij}(t)
\end{equation}
for a tangent vector $dA \in \mathcal{S}_N$. Since this is just the map $\pi$, it does not depend $A$ and therefore has constant rank $r$. Therefore, the proposition holds by the Rank Theorem of differential topology (see \cite{lee2013smooth}, Theorem 4.12).
\end{proof}
Intuitively, "Locally a smooth manifold of dimension $r$" means that for a fixed positive definite $A$, there are $r$ independent directions that one can move away from $A$ in the input space and generate different functions through the $A \mapsto \tr{A\Phi(t)}$ mapping.  Globally, the image could fail to be a manifold due to e.g. self-intersection or other pathologies.

\ifpreprint
\else
    \include{experimental}
\fi
\end{document}